\newtheorem{lemma}{\textbf{Lemma}}
\newtheorem{theorem}{\textbf{Theorem}}
\title{\LARGE \bf
Barometer-Aided Attitude Estimation
}
\author{Méloné Nyoba Tchonkeu$^{1}$, Soulaimane Berkane $^{2}$, \textit{Senior Member, IEEE},\\ and Tarek Hamel$^{3}$, \textit{Fellow Member, IEEE}%
\thanks{*This work was supported by the"Grands Fonds Marins" Project Deep-C, and the ASTRID ANR project ASCAR. This research work is also supported in part by NSERC-DG RGPIN-2020-04759 and Fonds de recherche du Québec (FRQ).}%
\thanks{$^{1}$M. Nyoba Tchonkeu is with the Department of Computer Science and Engineering, University of Quebec in Outaouais, 
        Gatineau, QC J8X3X7, Canada
        {\tt\small (nyom01@uqo.ca)}}%
\thanks{$^{2}$S. Berkane is with the Department of Computer Science and Engineering, University of Quebec in Outaouais, Gatineau, QC J8X3X7, and also with the Department of Electrical Engineering, Lakehead University,
        Thunder Bay, ON P7B 5E1, Canada
        {\tt\small (soulaimane.berkane@uqo.ca)}}%
\thanks{$^{3}$T. Hamel is with I3S-UniCA-CNRS, University Cote d’Azur and the Insitut Universitaire de France,
        06903 Sophia Antipolis, France
        {\tt\small (thamel@i3s.unice.fr)}}%
}
\begin{document}

\maketitle
\thispagestyle{empty}
\pagestyle{empty}

\begin{abstract}
Accurate and robust attitude estimation is a central challenge for autonomous vehicles operating in GNSS-denied or highly dynamic environments. In such cases, Inertial Measurement Units (IMUs) alone are insufficient for reliable tilt estimation due to the ambiguity between gravitational and inertial accelerations. While auxiliary velocity sensors, such as GNSS, Pitot tubes, Doppler radar, or visual odometry, are often used, they can be unavailable, intermittent, or costly.
This work introduces a barometer-aided attitude estimation architecture that leverages barometric altitude measurements to infer vertical velocity and attitude within a nonlinear observer on $\mathrm{SO}(3)$. The design cascades a deterministic Riccati observer with a complementary filter, ensuring Almost Global Asymptotic Stability (AGAS) under a uniform observability condition while maintaining geometric consistency. The analysis highlights barometer-aided estimation as a lightweight and effective complementary modality.

\end{abstract}

\section{INTRODUCTION}
Accurate attitude estimation for autonomous vehicles in GNSS-denied or highly dynamic environments remains a core challenge in inertial navigation. The IMU, which provides angular velocity and specific acceleration, serves as the primary sensing backbone. In many estimation frameworks (\textit{e.g.,}\cite{mahony2008nonlinear,hua2013implementation}), especially for low-cost UAVs and robotic systems, gravity is approximated using accelerometer measurements under the assumption of negligible linear accelerations. Yet, accelerometers do not measure gravity directly but the specific acceleration, which includes all non-gravitational accelerations. This approximation holds only under quasi-static conditions, where gravity is the dominant acceleration. In dynamic scenarios, such as legged locomotion, aggressive UAV maneuvers, or strong wind gusts, the signal deviates from the gravity direction, leading to ambiguity in tilt estimation. To address this, modern estimation architectures increasingly incorporate additional measurements to enhance observability and robustness.

Several authors have incorporated either body-frame magnetometers or inertial-frame velocity measurements to address this issue. This so-called velocity-aided attitude (VAA) problem has recently attracted increasing attention \cite{hua2010attitude,roberts2011attitude,berkane2017attitude}. Early solutions to the body-frame VAA problem relied on linearisation, e.g., \cite{2008_bonnabel_SymmetryPreservingObservers,2008_martin_InvariantObserverEarthVelocityAided}, whereas later approaches adopted more constructive designs and, in some cases, provided guarantees of almost-global asymptotic stability \cite{2013_troni_PreliminaryExperimentalEvaluation,2016_hua_StabilityAnalysisVelocityaided,wang2021nonlinear,benallegue2023velocity,Pieter2023}.
Although robust and theoretically grounded, these architectures generally assume full measurement of either body or inertial velocity, which limits their practical deployment when only partial velocity information is available or when measurements are unreliable. This limitation has received little theoretical attention. A notable exception is the work of Oliveira \textit{et al.} \cite{oliveira2024pitot} that addresses tilt and air velocity estimation for fixed-wing UAVs in GNSS-denied conditions by exploiting only a component of the air velocity vector in the body fixed frame provided by a \textit{single-axis Pitot tube}  and IMU data within a Riccati observer framework on $\mathrm{SO}(3) \times \mathbb{R}^3$ and guarantees local asymptotic convergence. While relevant, the approach is limited to fixed-wing UAVs and prone to uncertainties in Pitot tube measurements. 

In this paper, we focus on the contribution of a more versatile barometer sensor in estimating attitude. Barometers, widely available in modern UAVs and mobile robots, provide continuous altitude readings from which vertical velocity can, in principle, be obtained by temporal differentiation. This vertical velocity constrains motion along the gravity axis, offering information analogous to that of airspeed sensors for aiding attitude estimation. To address the noise inherent in barometric measurements, we first design a Riccati observer that jointly estimates vertical velocity and the specific acceleration in the tilt vector, ensuring global uniform exponential stability of the estimation error. Building on this, we then derive a second, cascaded filter to recover the full orientation in $\mathrm{SO}(3)$ with almost global stability guarantee. The method extends the state of the art and introduces an underexplored sensor modality to the field of attitude estimation. More importantly, it improves robustness in \textit{GNSS-degraded} scenarios and supports lightweight, redundant sensing architectures for embedded platforms. 

\section{PRELIMINARY MATERIAL}
\label{sec:prelims}

We denote by \( \mathbb{R} \) and \( \mathbb{R}_+ \) the sets of real and nonnegative real numbers, respectively. The \( n \)-dimensional Euclidean space is denoted by \( \mathbb{R}^n \). The Euclidean inner product of two vectors \( x, y \in \mathbb{R}^n \) is defined as \( \langle x, y \rangle = x^\top y \). The associated Euclidean norm of a vector \( x \in \mathbb{R}^n \) is \( |x| = \sqrt{x^\top x} \). Furthermore, we denote by \( \mathbb{R}^{m \times n} \) the set of real \( m \times n \) matrices. The set of \( n \times n \) positive definite matrices is denoted by \( \mathcal{S}^+(n) \), and the identity matrix is denoted by \( I_n \in \mathbb{R}^{n \times n} \). Given two matrices \( A, B \in \mathbb{R}^{m \times n} \), the Euclidean matrix inner product is defined as \( \langle A, B \rangle = \mathrm{tr}(A^\top B) \), and the Frobenius norm of \( A \in \mathbb{R}^{n \times n} \) is given by \( \|A\| = \sqrt{\langle A, A \rangle} \). We denote by \( A \otimes B \) the Kronecker product of matrices \( A \) and \( B \). The unit sphere \( \mathbb{S}^2 := \{ \eta \in \mathbb{R}^3 \mid |\eta| = 1 \} \subset \mathbb{R}^3 \) denotes the set of unit 3D vectors and forms a smooth submanifold of \( \mathbb{R}^3 \). A function \(g(x)\) is denoted \( \mathcal{O}(x) \) if it is bounded for any bounded \( x \in \mathbb{R}^n \), and \( g(x) \to 0 \) as \( x \to 0 \).

The special orthogonal group of 3D rotations is denoted by
$
\mathrm{SO}(3) := \{ R \in \mathbb{R}^{3 \times 3} \mid RR^\top = R^\top R = I_3,\ \det(R) = 1 \}.
$
The Lie algebra of $\mathrm{SO}(3)$ is 
$
\mathfrak{so}(3) := \{\, \Omega \in \mathbb{R}^{3\times 3} \mid \Omega^\top = -\Omega \,\},
$
isomorphic to $\mathbb{R}^3$ via the skew-symmetric operator 
$(\cdot)^\times : \mathbb{R}^3 \to \mathfrak{so}(3)$, defined such that 
$
u \times v = u^\times v$ for all $u,v \in \mathbb{R}^3.$
The exponential map \( \exp : \mathfrak{so}(3) \rightarrow \mathrm{SO}(3) \) defines a local diffeomorphism from a neighborhood of \( 0 \in \mathfrak{so}(3) \) to a neighborhood of \( I_3 \in \mathrm{SO}(3) \). This enables the composition map \( \exp \circ (\cdot)^\times : \mathbb{R}^3 \rightarrow \mathrm{SO}(3) \), which is given by the following Rodrigues' formula~\cite{Ma2004rodriguesformula} :

\begin{equation}
\exp([\theta]^{^\times}) 
= I_3 - \frac{\sin(\|\theta\|)}{\|\theta\|}[\theta]^{^\times}
+ \frac{1-\cos(\|\theta\|)}{\|\theta\|^2}([\theta]^{^\times})^2.
\label{eq:rodriguesformula}
\end{equation}

Consider the linear time-varying (LTV) system given by
\begin{equation}
\begin{cases}
\dot{x} = A(t)x + B(t)u, \\
y = C(t)x,
\end{cases}
\label{eq:LTV_system}
\end{equation}
with state \( x \in \mathbb{R}^n \), input \( u \in \mathbb{R}^\ell \), and output \( y \in \mathbb{R}^m \). The matrix-valued functions \( A(t) \), \( B(t) \), and \( C(t) \) are assumed to be continuous and bounded. By definition from~\cite{Besancon2007}, the system ~\eqref{eq:LTV_system} or pair \( (A(t), C(t)) \) is \emph{uniformly observable} if there exist constants 
\( \delta, \mu > 0 \) such that, for all \( t \geq 0 \),
\begin{equation}
W(t, t+\delta) := \frac{1}{\delta} \int_t^{t+\delta} 
\Phi^\top(s, t) \, C^\top(s) \, C(s) \, \Phi(s, t) \, ds 
\geq \mu I_n, \label{eq:observability_gramian}
\end{equation}
where \( \Phi(s, t) \) is the state transition matrix such that
\begin{equation}
\frac{d}{dt} \Phi(s, t) = A(t)\Phi(s, t), 
\quad \Phi(t, t) = I_n, \quad \forall s \geq t. \label{eq:transition_matrix} 
\end{equation}
\( W(t, t+\delta) \) is called the \emph{observability Gramian} of the system.

\section{PROBLEM DESCRIPTION}
\begin{figure}[ht]
  \centering
  \begin{overpic}[width=0.7\linewidth]{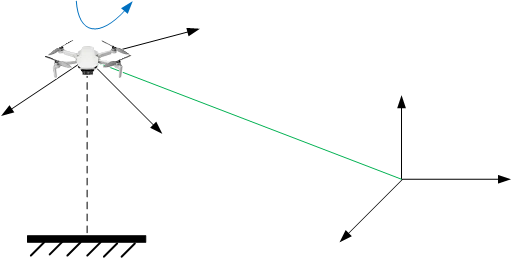}
    \put(15,38){\small \(\left\{\mathcal{B}\right\}\)}
    \put(38,46){\small \(e_1^{\mathcal{B}}\)}
    \put(0,32){\small \(e_3^{\mathcal{B}}\)}
    \put(32,24){\small \(e_2^{\mathcal{B}}\)}
    \put(20,20){\small \(h\)}
    \put(80,17){\small \(\left\{\mathcal{I}\right\}\)}
    \put(80,30){\small \(e_3\)}
    \put(100,15){\small \(e_2\)}
    \put(66,8){\small \(e_1\)}
    \put(60,25){\small \(p\)}
    \put(26,48){\small \(R\)}
  \end{overpic}
  \caption{\small \textit{We assumed a vehicle equipped with an IMU and a barometer measuring the height $h$.
  }}
  \label{fig:veh_baro_imu}
\end{figure}
Let \( R \in \mathrm{SO}(3) \) denote the rotation matrix describing the orientation of the
body-attached frame \( \mathcal{B} \) with respect to the inertial frame \( \mathcal{I} \). The position and linear velocity of the rigid body, expressed in the inertial frame \( \mathcal{I} \), are denoted by \( p \in \mathbb{R}^3 \) and \( v \in \mathbb{R}^3 \), respectively.
The IMU components provide body-fixed measurements of the angular velocity \( \boldsymbol{\omega} \in \mathbb{R}^3 \) and the linear specific acceleration \( \mathbf{a} \in \mathbb{R}^3 \). The second-order kinematics of the vehicle are given by
\begin{align}
\dot{p} &= v, \label{eq:position_dyn} \\
\dot{v} &= R\mathbf{a} + \mathbf{g}, \label{eq:velocity_dyn} \\
\dot{R} &= R\boldsymbol{\omega}^\times,
\label{eq:attitude_dyn}
\end{align}
where \( \mathbf{g} = g \mathbf{e}_3 \in \mathbb{R}^3 \) is the gravitational acceleration expressed in the inertial frame, \( \mathbf{e}_3 = \begin{bmatrix}0  & 0 & 1 \end{bmatrix}^\top \in \mathbb{S}^2 \) denotes the standard gravity direction, and \( g \approx 9.81\, \text{m/s}^2 \) is the gravity constant. To enhance observability of the vertical motion, we define the altitude as 
$h := \mathbf{e}_3^\top p$. The barometric sensor output is modeled as
\begin{equation}
y_b = h + n_b,
\label{eq:baro_measurement}
\end{equation}
where $n_b \sim \mathcal{N}(0,\sigma_b^2)$ is zero-mean noise. 
This scalar measurement provides partial information on the inertial position and, through its time derivatives, can also contribute to constraining the gravity direction for tilt estimation, particularly when GNSS velocity data are unavailable. In addition, we assume available a magnetometer that provides measurements of the Earth's magnetic field,
\begin{equation}
\mathbf{m}_{\mathcal{B}} = R^\top \mathbf{m}_{\mathcal{I}},
\label{eq:mag_measurement}
\end{equation}
where $\mathbf{m}_{\mathcal{I}} \in \mathbb{S}^2$ is the known magnetic field vector expressed in the inertial frame. In summary, the vehicle is modeled by the dynamics 
\eqref{eq:position_dyn}--\eqref{eq:attitude_dyn} and equipped with an IMU and a barometric altimeter. 
The objective is to estimate the attitude $R \in SO(3)$ from these measurements, exploiting the altitude information to improve observability of the vertical direction.

\section{PROPOSED OBSERVER DESIGN}

This section presents a two-stage observer architecture for attitude estimation. The design exploits a reduced-order Riccati observer for vertical dynamics and tilt estimation, followed by a nonlinear observer on \( \mathrm{SO}(3) \) to estimate full orientation.

\subsection{Tilt and Barometric Altitude Estimation}

We define the tilt (reduced attitude) as the gravity direction expressed in the body frame~\cite{benallegue2023velocity}:
\begin{equation}
z := R^\top \mathbf{e}_3 \in \mathbb{S}^2,
\end{equation}
which evolves according to
\begin{equation}
\dot{z} = -\boldsymbol{\omega}^\times z.
\label{eq:z_dot}
\end{equation}
For the altitude $h := \mathbf{e}_3^\top p$, differentiation yields
\begin{equation}
\ddot{h} = g + \mathbf{a}^\top z,
\label{eq:h_ddot}
\end{equation}
where $\boldsymbol{\omega}$ and $\mathbf{a}$ are the IMU angular velocity and specific acceleration.  
Together, \eqref{eq:z_dot}--\eqref{eq:h_ddot} and the barometer measurement 
\eqref{eq:baro_measurement} can be cast into a linear time-varying (LTV) system. 
Letting $x := \begin{bmatrix} h & \dot{h} & z^\top \end{bmatrix}^\top \in \mathbb{R}^5$, we obtain
\begin{equation}
\begin{cases}
\dot{x} = A(t)x + Bg, \\
y_b = Cx + n_b,
\end{cases}
\label{eq:ltv}
\end{equation}
with
\begin{equation}
A(t) = 
\begin{bmatrix}
0 & 1 & \mathbf{0}_{1 \times 3} \\
0 & 0 & \mathbf{a}^\top \\
\mathbf{0}_{3 \times 1} & \mathbf{0}_{3 \times 1} & -\boldsymbol{\omega}^\times
\end{bmatrix}, \quad 
B = \begin{bmatrix} 0 \\ 1 \\ \mathbf{0}_{3 \times 1} \end{bmatrix}, \label{eq:state_command_matrix}
\end{equation}
and the output matrix :
\begin{equation}
C = \begin{bmatrix} 1 & 0 & \mathbf{0}_{1 \times 3} \end{bmatrix}. \label{eq:output_matrix}
\end{equation}

To estimate the state vector $x$ independently of the attitude $R$, we decouple its dynamics from the attitude estimation by designing a Riccati observer of the Kalman–Bucy type. The observer is given by
\begin{equation}
\dot{\hat{x}} = A(t) \hat{x} + B g + K(t)\left(y_p - C \hat{x} \right), \label{eq:riccati_observer}
\end{equation}
where $K(t) = P(t) C^\top M^{-1},$ with \(M= \sigma_b^2 \), and \(P(t)\) solution to the Continuous-time Riccati Equation (CRE): \(\dot{P} = A(t) P + P A^\top(t) - P C^\top M^{-1} C P + Q\), where \(P(0)\) is a positive definite matrix, and \( Q \succ 0 \) is a symmetric positive definite matrix modeling process uncertainty. The stability and convergence properties of this observer are fundamentally linked to the well-posedness of the CRE, and more precisely, to the UO of the system. The UO property guarantees the existence of a unique, bounded, and positive-definite solution \( P(t) \) to the CRE for all \( t \ge 0 \). As a consequence, the estimation error \(\tilde{x} := \hat x - x\) between the observer's state and the actual state decays exponentially to zero with the rate of convergence tuned by \(M\) and \(Q\) \cite{Hamel2017PositionMeasurements}.

To analyze uniform observability (UO), we evaluate the observability Gramian in a convenient block form. In particular, we block-partition the system matrix in~\eqref{eq:state_command_matrix} as
\begin{equation}
A(t) =\begin{bmatrix}A_{11}(t) & A_{12}(t) \\
\mathbf{0}_{3 \times 2} & -\boldsymbol{\omega}^\times(t)
\end{bmatrix}, \label{eq:state_matrix}
\end{equation}
with \[ A_{11}(t) = \begin{bmatrix} 0 & 1 \\ 0 & 0 \end{bmatrix}, \quad
A_{12}(t) = \begin{bmatrix} \mathbf{0}_{3 \times 1} & \mathbf{a}(t) \end{bmatrix}^{\top},\]

and the output matrix of ~\eqref{eq:output_matrix}
\[
C = \begin{bmatrix} C_1 & \mathbf{0}_{1 \times 3} \end{bmatrix},
\quad C_1 = [\,1\ 0\,].
\]
Due to the structure of \(A(t)\) in~\eqref{eq:state_matrix}, the state transition matrix has the block form
\begin{equation}
\Phi(t,\tau) =
\begin{bmatrix} 
\phi_{11}(t,\tau) & \phi_{12}(t,\tau) \\
\mathbf{0}_{3 \times 2} & \phi_{22}(t,\tau)
\end{bmatrix}, \label{eq:state_trans_matrix}
\end{equation}
with $\phi_{11}\in\mathbb{R}^{2\times2}, \phi_{12}\in\mathbb{R}^{2\times3},
\phi_{22}\in\mathbb{R}^{3\times3}.$
From~\eqref{eq:transition_matrix} and~\eqref{eq:state_trans_matrix}, the derivative of the transition matrix is given by
\begin{equation}
\begin{aligned}
\frac{d}{dt} \Phi(t, \tau) &= 
&=\begin{bmatrix}
A_{11}\phi_{11} & A_{11}\phi_{12} + A_{12}\phi_{22} \\
0 & -\boldsymbol{\omega}^\times\phi_{22}
\end{bmatrix}, \label{eq:deriv_trans_matrix}
\end{aligned}
\end{equation}
with \(\phi_{11}(\tau,\tau) = I_2\), \(\phi_{22}(\tau,\tau) = I_3\), and \(\phi_{12}(\tau,\tau) = \mathbf{0}_{2\times3}\).
Since \(A_{11}\) is a constant matrix, we have from~\eqref{eq:deriv_trans_matrix} that 
\begin{equation}
\phi_{11}(t,\tau) = \exp\left(A_{11}(t-\tau)\right) = \begin{bmatrix} 1 & (t-\tau) \\ 0 & 1 \end{bmatrix}\label{eq:ph_11}
\end{equation}
In view of \eqref{eq:state_trans_matrix} and \eqref{eq:output_matrix} and \eqref{eq:observability_gramian}, the Gramian of the LTV system ~\eqref{eq:ltv} is given by  
\begin{equation}
\begin{aligned}
&W(t,t+\delta) =\\
&\frac{1}{\delta}\int_{t}^{t + \delta} \begin{bmatrix} \phi_{11}(s, t)^\top \\
\phi_{12}(s, t)^\top\end{bmatrix} C_1^\top C_1 \begin{bmatrix} \phi_{11}(s,t)&\phi_{12}(s,t)
\end{bmatrix}ds.
\end{aligned}
\label{eq:gramian}
\end{equation}

\begin{lemma} \label{Lemma1}
\textit{Assume that the body-frame linear acceleration \(\mathbf{a}(t)\) and the angular velocity \(\boldsymbol{\omega}(t)\)} are continuous and uniformly bounded. Moreover, assume that the inertial-frame linear acceleration \(a_{\mathcal{I}}(t) := R(t)\mathbf{a}(t)\) is \emph{persistently exciting} (PE) in the sense that there exist  \( \bar{\delta}, \bar{\mu} > 0 \) such that \( \forall t \ge 0 \),
\begin{equation}
\frac{1}{\bar{\delta}}\int_t^{t+\bar{\delta}}a_{\mathcal{I}}(s)a_{\mathcal{I}}(s)^\top\,ds \ge \bar{\mu}. 
\label{eq:PE_Condition}
\end{equation}
Then the pair $(A(t), C)$ is \emph{uniformly observable}, and the equilibrium \(\tilde x = 0_{5\times1}\) is globally uniformly exponentially stable (GES).
\end{lemma}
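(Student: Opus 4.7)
The plan is to prove uniform observability of $(A(t), C)$ by a direct lower bound on the Gramian in \eqref{eq:gramian} via the PE condition \eqref{eq:PE_Condition}, and then invoke the classical Riccati-observer theorem (e.g., \cite{Hamel2017PositionMeasurements}) to deduce GES of $\tilde x = 0$.

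First I would make the transition matrix fully explicit. The block $\phi_{11}(s,t)$ is the constant-coefficient exponential in \eqref{eq:ph_11}. Using $\dot R = R \boldsymbol{\omega}^\times$, the ODE $\partial_s \phi_{22} = -\boldsymbol{\omega}^\times(s)\phi_{22}$ with $\phi_{22}(t,t) = I_3$ admits the closed form $\phi_{22}(s,t) = R^\top(s) R(t)$. Duhamel's formula applied to \eqref{eq:deriv_trans_matrix} then gives
\[
\phi_{12}(s,t) = \begin{bmatrix} \int_t^s (s-\tau)\, a_{\mathcal{I}}(\tau)^\top\, d\tau \\[2pt] \int_t^s a_{\mathcal{I}}(\tau)^\top\, d\tau \end{bmatrix} R(t),
\]
after substituting $\mathbf{a}(\tau)^\top R^\top(\tau) = a_{\mathcal{I}}(\tau)^\top$. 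Plugging into \eqref{eq:gramian} collapses the Gramian to $W(t,t+\delta) = \tfrac{1}{\delta}\int_t^{t+\delta} \xi(s,t)\xi(s,t)^\top\,ds$ with the row
\[
\xi(s,t)^\top = \Bigl[\,1,\; s-t,\; \textstyle\int_t^s (s-\tau)\, a_{\mathcal{I}}(\tau)^\top d\tau \cdot R(t)\,\Bigr] \in \mathbb{R}^{1 \times 5}.
\]

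For the UO lower bound, take an arbitrary unit vector $v = (v_1, v_2, v_3)$ with $v_3 \in \mathbb{R}^3$, set $u := R(t) v_3$ so that $|u| = |v_3|$, and define $f_v(s) := v^\top \xi(s,t) = v_1 + (s-t)v_2 + \int_t^s (s-\tau)\, a_{\mathcal{I}}(\tau)^\top u\, d\tau$. Successive differentiation yields $f_v(t) = v_1$, $f_v'(t) = v_2$, and $f_v''(s) = a_{\mathcal{I}}(s)^\top u$. Hence $f_v \equiv 0$ on $[t, t+\delta]$ with $\delta \geq \bar\delta$ forces $v_1 = v_2 = 0$ and $a_{\mathcal{I}}^\top u \equiv 0$ on $[t,t+\bar\delta]$, so \eqref{eq:PE_Condition} forces $u = 0$, whence $v = 0$. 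To upgrade this pointwise non-degeneracy to a uniform lower bound $W(t,t+\delta) \geq \mu I_5$, I would use a quantitative splitting argument on the decomposition $|v|^2 = v_1^2 + v_2^2 + |v_3|^2$: when $|v_3|^2 \geq \alpha$, a double integration by parts of $\int f_v^2\,ds$ together with the PE inequality yields a uniform lower bound independent of $R(t)$ (since $R(t)$ enters only through the unitary map $v_3 \mapsto u$); when $|v_3|^2 < \alpha$, the first two components dominate and the bound follows from the observability of the constant pair $(A_{11}, C_1)$, the boundedness of $\mathbf{a}$ being used to control the cross terms.

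Once UO is established with bounded $A(t)$ and $C$, the classical Riccati-observer theory of~\cite{Hamel2017PositionMeasurements} yields a unique solution $P(t)$ of the CRE satisfying $p_1 I_5 \leq P(t) \leq p_2 I_5$ for some constants $p_1, p_2 > 0$. The Lyapunov candidate $V(t,\tilde x) = \tilde x^\top P^{-1}(t)\tilde x$ then satisfies $\dot V \leq -c\, V$ for some $c > 0$ uniformly in $t$, yielding GES of $\tilde x = 0$. The hard part will be the uniform, $t$-independent Gramian lower bound: the splitting constant $\alpha$ and the resulting $\mu$ must be chosen using only the PE data $(\bar\delta, \bar\mu)$ and the uniform bound on $|\mathbf{a}|$, so that no dependence on $R(t)$ or the initial time $t$ leaks in.
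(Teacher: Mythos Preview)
Your proposal is correct in outline and takes a genuinely different route from the paper. The paper proceeds by contradiction: assuming UO fails, it extracts (via compactness of $\mathbb{S}^4$) a sequence of unit directions $\hat d_p$ and times $t_p$ along which the Gramian vanishes, then invokes an interpolation lemma (Lemma~A.1 of \cite{Pascal2017}) to propagate $\int|f_p|^2\to 0$ to $\int|f_p^{(k)}|^2\to 0$ for $k=1,2$, and finally uses the PE condition on the second derivative to force $d=0$. The transition blocks $\phi_{22},\phi_{12}$ are never made explicit.

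Your approach is more constructive: you compute $\phi_{22}(s,t)=R^\top(s)R(t)$ and $\phi_{12}$ in closed form, which collapses the second derivative to the clean identity $f_v''(s)=a_{\mathcal I}(s)^\top u$ with $u=R(t)v_3$. This makes the role of the PE hypothesis completely transparent and explains, better than the paper does, why it is $a_{\mathcal I}$ rather than $\mathbf a$ that must be PE. The unitary change $v_3\mapsto u$ is the right observation to get $t$-independence of the eventual bound. What the paper's route buys is that the compactness-plus-interpolation argument sidesteps the explicit quantitative splitting you propose; what your route buys is explicit constants and a clearer mechanism. Be aware that your ``double integration by parts'' step is essentially the contrapositive of the same interpolation lemma the paper uses: to pass from a lower bound on $\int|f_v''|^2$ (given by PE) to a lower bound on $\int|f_v|^2$, you will need a quantitative Landau--Kolmogorov/Gagliardo--Nirenberg-type inequality together with the uniform bound on $f_v^{(3)}$ coming from $\dot a_{\mathcal I}$ (hence from the boundedness of $\mathbf a,\boldsymbol\omega$). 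This is exactly the ``hard part'' you flagged, and it is where the two proofs reconverge in substance if not in presentation.
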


\begin{proof}
To show that the system ~\eqref{eq:ltv} is uniformly observable, it suffices to show that there exist \( \bar{\delta}, \bar{\mu} > 0 \) such that ~\eqref{eq:observability_gramian} holds, i.e. \(W(t, t + \bar \delta) > \bar \mu I_n,~ \forall t \geq 0,\) with \(W(t, t + \bar \delta)\) given by~\eqref{eq:gramian}.
Assume, for contradiction, that the proposed system is \emph{not} uniformly observable.  
Then, for every \(\bar \mu > 0\) and  \(\bar \delta > 0\), there exists \(t \ge 0\) such that \(W(t,t+\bar\delta) < \bar\mu I_n \), 
Let \(\{\mu_p\}_{p\in\mathbb{N}}\) be a sequence decreasing to zero with \(\mu_p > 0\), and let 
\(\bar{\delta} > 0\) satisfy the PE condition~\eqref{eq:PE_Condition}.  
Then, there exist sequences \(\{t_p\}\) and \(\{\hat d_p\}\) such that:
\(\hat d_p \in \mathcal{D} := \{\, d \in \mathbb{R}^5 : \|d\| = 1 \,\}\),  
      the \emph{unit sphere in \(\mathbb{R}^5\)},  and 
\(
\hat d_p^\top W(t_p,t_p+\bar{\delta}) \hat d_p < \mu_p,
\quad \forall p \in \mathbb{N}.
\)
Since \(\mathcal{D}\) is compact, there exists a subsequence of \(\{\hat d_p\}\) that converges to some \(d \in \mathcal{D}\), \(\|d\| = 1\).
Letting \(p \to \infty\) and using \(\mu_p \to 0\) yields the convergence relation from  \eqref{eq:gramian}
\begin{equation}
\lim_{p \to \infty} 
\int_0^{\bar{\delta}} 
\big\| C\, \Phi(s,t_p)\, \hat d_p \big\|^2 ds = 0 \label{eq:conv_relation}
\end{equation}
or, by a change of variables with a scalar output function,
\begin{equation}
\lim_{p \to \infty} 
\int_0^{\bar{\delta}} |f_p(s)|^2 ds = 0 \label{eq:conv_relation_f}
\end{equation}
where we define
\[
\begin{aligned}
f_p(t) &:= C \Phi(t+t_p,t_p)\hat d_p \\
&= C_1 \phi_{11}(t+t_p,t_p) d_1 + C_1 \phi_{12}(t+t_p,t_p) d_2,
\end{aligned}
\]
where \(d = [\,d_1^\top, d_2^\top\,]^\top\) with 
\(d_1 \in \mathbb{R}^2\), \(d_2 \in \mathbb{R}^3\).
From~\eqref{eq:transition_matrix},~\eqref{eq:state_matrix}-\eqref{eq:state_trans_matrix},~\eqref{eq:conv_relation}-\eqref{eq:conv_relation_f}, and knowing that \(A_{11}^2 = 0_{2\times2}\), we obtain the successive time-derivatives of \(f_p(t)\), as follow :
\[
\begin{aligned}
f_p^{(1)}(t) &= C_1 A_{11} \phi_{11} d_1 
             + C_1 \left( A_{11}\phi_{12} + A_{12}\phi_{22} \right) d_2, \\
f_p^{(2)}(t) &= C_1 \left( A_{11} A_{12} - A_{12}\boldsymbol{\omega}^\times \right) \phi_{22} d_2
\end{aligned}
\]
Now, using the results of Lemma~A.1 of \cite{Pascal2017},  we deduce:
\[
\lim_{p \to \infty} 
\int_0^{\bar{\delta}} |f_p^{(k)}(s)|^2 ds = 0, 
\quad k=0,1,\dots,2.
\]
The highest derivative yields 
\[f_p^{(2)}(t_p) \to \mathbf{a}^\top(t_p)\big(-\boldsymbol{\omega}^\times(t_p)\big) d_2 \to 0 ~\text{as } p \to \infty.\]
By the PE assumption, this implies \(d_2 = 0\).  
Substituting \(d_2 = 0\) into \(f_p(s)\) and \(f_p^{(1)}(s)\) gives 
\begin{equation} 
C_1\phi_{11}d_1 = 0,  \quad C_1 A_{11}\phi_{11}d_1 = 0, \label{eq:fp_fp_1}
\end{equation}
with  \(d_1 = [\,d_1^{(1)} ~~ d_1^{(2)}\,]^\top\).
From~\eqref{eq:ph_11} we compute \(C_1 \phi_{11}(t,\tau) =  \begin{bmatrix}1 & (t-\tau)\end{bmatrix}\) and \(C_1 A_{11}\phi_{11}(t,\tau) =  \begin{bmatrix}0 & 1\end{bmatrix}\).
By substituting these into~\eqref{eq:fp_fp_1}, we get \(
d_1^{(1)}+d_1^{(2)}(t-\tau) = d_1^{(2)}  = 0\). This implies \(d_1 = 0\).
Therefore \(d = 0\), contradicting \(\|d\| = 1\).  
Hence, the pair \((A(t),C)\) is uniformly observable. This in turn guarantees the global exponential stability of the equilibrium \(\tilde{x} = \boldsymbol{0}_{5\times 1}\)(See~\cite{Hamel2017PositionMeasurements}).  
\end{proof}
Lemma~\ref{Lemma1} imposes a persistent excitation condition on the inertial acceleration, requiring sustained variation in angular velocity and specific acceleration.
\subsection{Attitude Estimation on \( \mathrm{SO}(3) \)}
Once the tilt vector \( \hat{z} \) is estimated from the Riccati observer, full attitude estimation requires an additional known direction to resolve the heading ambiguity. To this end, we incorporate magnetometer measurements \(\mathbf{m}_{\mathcal{B}} \in \mathbb{S}^2 \) defined in \eqref{eq:mag_measurement}. Let \( \hat{R} \in \mathrm{SO}(3) \) denote the estimate of the attitude \( R \). The attitude observer is given by
\begin{equation}
\dot{\hat{R}} = \hat{R} \boldsymbol{\omega}^\times - \sigma_R^\times \hat{R}, \quad \hat{R}(0) \in \mathrm{SO}(3),
\label{eq:attitude_observer}
\end{equation}
where the correction term \( \sigma_R \in \mathbb{R}^3 \) is defined as
\begin{equation}
\sigma_R = k_z (\mathbf{e}_3 \times \hat{R} \hat{z}) + k_m (\bar{\mathbf{m}}_{\mathcal{I}} \times \hat{R} \bar{\mathbf{m}}_{\mathcal{B}}),
\label{eq:attitude_correction}
\end{equation}
with \( \bar{\mathbf{m}}_{\mathcal{I}} = \bar\pi_{\mathbf{e}_3} \mathbf{m}_{\mathcal{I}} \), \( \bar{\mathbf{m}}_{\mathcal{B}} = \bar{\pi}_{\hat z} \mathbf{m}_{\mathcal{B}} \), \( k_z > 0 \), and \( k_m \geq 0 \); where $\bar{\pi}_u := |u|^2 I_3 - uu^\top$ is a regularized projection operator, well defined even when $u=0$.
The use of $\bar{\pi}_{\hat z}$ prevents singularities if $\hat z$ vanishes, and the projected magnetometer vectors ensure that yaw estimation is decoupled from roll and pitch; see \cite{hua2013implementation}. The overall architecture is illustrated in figure~\ref{fig:observer_block_diagram}.

\begin{figure}[ht]
  \centering
  \begin{overpic}[width=0.7\linewidth]{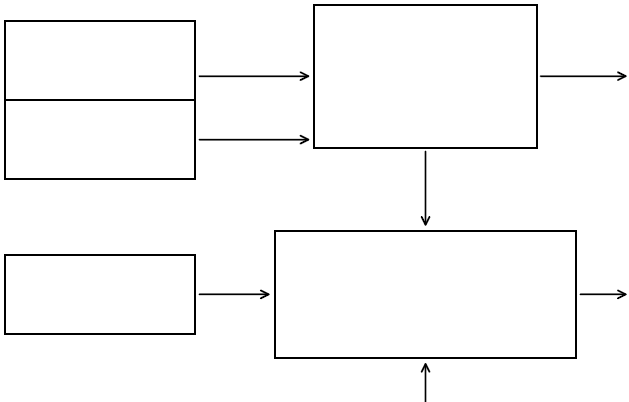}
    \put(10,56){\small IMU}
    \put(2,53){\small (acc + gyro)}
    \put(5,42){\small Barometer}
    \put(0,16){{\small Magnetometer}}
    \put(32,54){\small \(\mathbf{a},\,\boldsymbol{\omega}\)}
    \put(34,44){\small \(h\)}
    \put(34,20){\small \(\mathbf{m}_{\mathcal{B}}\)}
    \put(50,52){\small Riccati Observer}
    \put(61,46){\small~\eqref{eq:riccati_observer}} 
    \put(90,54){\normalsize \(\hat{h}\)}
    \put(70,33){\normalsize \(\hat{z}\)}
    \put(46,19){\small Attitude Observer}
    \put(61,12){\small~\eqref{eq:attitude_observer}}
    \put(92,18){\normalsize \(\hat{R}\)}
    \put(58,2){\small \(\mathbf{m}_{\mathcal{I}}\)}
  \end{overpic}
  \caption{\small \textit{Two-stage observer architecture: a Riccati observer fuses IMU and barometer data to estimate tilt and height, followed by an \( \mathrm{SO}(3) \) observer combining tilt and magnetometer to estimate \( \hat{R} \).}}
  \label{fig:observer_block_diagram}
\end{figure}

\begin{theorem}\label{Theorem1}
Consider the Riccati observer~\eqref{eq:riccati_observer} and the attitude observer~\eqref{eq:attitude_observer} with correction~\eqref{eq:attitude_correction}. Assume the pair \((A(t), C)\) is uniformly observable according to Lemma~\ref{Lemma1}, then the estimation error \(\tilde{R}=\hat RR^\top, \tilde{x}=x-\hat x\) converge to the set of equilibria \(\mathcal{E} = \mathcal{E}_s \cup \mathcal{E}_u\), where \(\mathcal{E}_s = \{(I_3,0_{5\times1})\}\) and \(\mathcal{E}_u = \left\{(U\Lambda U^\top, 0)\,\middle|\, \Lambda = \mathrm{diag}(1, -1, -1),\, U \in \mathrm{SO}(3) \right\}.\) Moreover, the set \(\mathcal{E}_u\) is unstable, and the singleton \(\mathcal{E}_s\) is almost-globally asymptotically stable (AGAS).
\end{theorem}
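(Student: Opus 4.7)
The plan hinges on the cascade structure of the estimator. By Lemma~\ref{Lemma1}, $\tilde{x}(t)\to 0$ globally uniformly exponentially, so in particular $\hat{z}(t)\to R^\top(t)\mathbf{e}_3$ exponentially fast, independently of $\hat{R}$. Differentiating $\tilde{R}=\hat{R}R^\top$ and using $\dot{R}=R\boldsymbol{\omega}^\times$ gives the attitude error equation $\dot{\tilde{R}}=-\sigma_R^\times\tilde{R}$. A short calculation, using the identity $\bar{\pi}_{R^\top\mathbf{e}_3}R^\top = R^\top\bar{\pi}_{\mathbf{e}_3}$ (a consequence of $|R^\top\mathbf{e}_3|=1$ and orthogonality of $R$), then decomposes
\[
\sigma_R = \sigma_R^0(\tilde{R}) + \delta(t), \qquad \sigma_R^0 := k_z(\mathbf{e}_3\times\tilde{R}\mathbf{e}_3) + k_m(\bar{\mathbf{m}}_{\mathcal{I}}\times\tilde{R}\bar{\mathbf{m}}_{\mathcal{I}}),
\]
where $\sigma_R^0$ is the nominal correction evaluated at the true tilt and $\delta(t)=\mathcal{O}(|\tilde{x}|)$ is a bounded, exponentially vanishing perturbation.

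Next I would study the nominal system $\dot{\tilde{R}}=-(\sigma_R^0)^\times\tilde{R}$ with the standard Mahony-type Lyapunov candidate
\[
V(\tilde{R}) = k_z\bigl(1-\mathbf{e}_3^\top\tilde{R}\mathbf{e}_3\bigr) + k_m\bigl(|\bar{\mathbf{m}}_{\mathcal{I}}|^2 - \bar{\mathbf{m}}_{\mathcal{I}}^\top\tilde{R}\bar{\mathbf{m}}_{\mathcal{I}}\bigr),
\]
which is smooth and non-negative on $\mathrm{SO}(3)$. A direct computation yields $\dot{V}=-|\sigma_R^0|^2\leq 0$. LaSalle's invariance principle on the compact manifold $\mathrm{SO}(3)$ then confines trajectories to $\{\sigma_R^0=0\}$. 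Since $\mathbf{e}_3\perp\bar{\mathbf{m}}_{\mathcal{I}}$ by construction of $\bar{\pi}_{\mathbf{e}_3}$ and the two vectors are linearly independent, the condition $\sigma_R^0=0$ forces $\tilde{R}\mathbf{e}_3=\pm\mathbf{e}_3$ and $\tilde{R}\bar{\mathbf{m}}_{\mathcal{I}}=\pm\bar{\mathbf{m}}_{\mathcal{I}}$. Combined with $\det\tilde{R}=1$, the admissible sign patterns yield $\tilde{R}=I_3$ together with three specific $180^\circ$ rotations whose axes coincide with $\mathbf{e}_3$, $\bar{\mathbf{m}}_{\mathcal{I}}$, or $\mathbf{e}_3\times\bar{\mathbf{m}}_{\mathcal{I}}$; each admits the spectral form $U\,\mathrm{diag}(1,-1,-1)\,U^\top$ for a suitable $U\in\mathrm{SO}(3)$, recovering $\mathcal{E}=\mathcal{E}_s\cup\mathcal{E}_u$.

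Finally, to establish AGAS of $\mathcal{E}_s$ and instability of $\mathcal{E}_u$, I would linearize the nominal error dynamics at each equilibrium: at $\tilde{R}=I_3$ the linearization is exponentially stable, whereas at every $\tilde{R}\in\mathcal{E}_u$ it has at least one unstable mode, so the equilibrium is a hyperbolic saddle. The union of the stable manifolds of these finitely many saddles is a finite union of submanifolds of positive codimension in the compact manifold $\mathrm{SO}(3)$, hence of measure zero, so almost every nominal trajectory converges to $\mathcal{E}_s$. To propagate these conclusions to the full cascade, I would combine $V$ with the Riccati-error Lyapunov $\tilde{x}^\top P(t)^{-1}\tilde{x}$ (where $P(t)$ is bounded away from $0$ and $\infty$ by Lemma~\ref{Lemma1}) into a composite function $W=V(\tilde{R})+\gamma\,\tilde{x}^\top P(t)^{-1}\tilde{x}$, dominating the $\delta$-cross-coupling via Young's inequality and the exponential decay of $\tilde{x}$. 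The main obstacle I anticipate is propagating \emph{almost global} (rather than merely local) asymptotic stability through the cascade, since AGAS is not automatically preserved by vanishing perturbations on manifolds: the perturbation $\delta(t)$ could in principle steer trajectories onto stable manifolds of $\mathcal{E}_u$. Overcoming this requires a robustness argument that exploits both the exponential rate of $\tilde{x}\to 0$ and the structural stability of the hyperbolic saddles in $\mathcal{E}_u$.
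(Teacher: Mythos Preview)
Your proposal is correct and follows essentially the same cascade skeleton as the paper: derive $\dot{\tilde{R}}=-\sigma_R^\times\tilde{R}$, split $\sigma_R$ into a nominal part depending only on $\tilde{R}$ plus an $\mathcal{O}(\tilde{x})$ perturbation (using the identity $\bar{\pi}_{R^\top\mathbf{e}_3}R^\top=R^\top\bar{\pi}_{\mathbf{e}_3}$, which the paper leaves implicit), and analyse the autonomous nominal system on $\mathrm{SO}(3)$ before treating the coupling. Where you differ is in how you close each of the two steps. For the nominal system you carry out the Mahony-type Lyapunov/LaSalle analysis and linearisation explicitly and arrive at exactly three isolated $180^\circ$ equilibria (about $\mathbf{e}_3$, $\bar{\mathbf m}_{\mathcal I}$, and their cross product); the paper instead cites \cite{mahony2008nonlinear} and \cite{VanGoor2025} and states $\mathcal{E}_u$ as the full $\mathbb{RP}^2$ of $180^\circ$ rotations, so your description is in fact sharper (your three points lie inside the paper's $\mathcal{E}_u$, and your measure-zero argument via stable manifolds of hyperbolic saddles is a more concrete version of the paper's ``2D in a 3D manifold'' remark). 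For the cascade step, you propose a composite Lyapunov function $W=V(\tilde{R})+\gamma\,\tilde{x}^\top P^{-1}\tilde{x}$ and correctly flag the real difficulty---propagating \emph{almost}-global stability through a vanishing perturbation on a manifold. The paper sidesteps this by invoking Angeli's almost-global ISS framework: it checks that the nominal $\tilde{R}$-system is AGAS with isolated unstable equilibria, notes that $\tilde{x}$ is GES and hence confined to a compact set, applies \cite[Proposition~2]{Angeli2010} to conclude almost-global ISS of the $\tilde{R}$-subsystem with input $\tilde{x}$, and then uses the cascade result \cite[Th.~2]{Angeli2004} to obtain AGAS of the interconnection. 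Your hands-on route (exploiting exponential decay of $\tilde{x}$ together with hyperbolicity of the saddles) is exactly what underlies those theorems, so it would work, but the paper's citation-based argument is considerably shorter.
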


\begin{proof}Define the attitude error \( \tilde{R} = \hat{R} R^\top \in \mathrm{SO}(3) \) and tilt error \( \tilde{z} := z -\hat z\). Differentiating \( \tilde{R} \), we obtain
\begin{align}
\dot{\tilde{R}} &= (\hat{R} \boldsymbol{\omega}^\times - \sigma_R^\times \hat{R}) R^\top - \hat{R} \boldsymbol{\omega}^\times R^\top \notag \\
&= -\left( k_z \mathbf{e}_3 \times \hat{R} \hat{z} + k_m \bar{\mathbf{m}}_{\mathcal{I}} \times \hat{R} \bar{\mathbf{m}}_{\mathcal{B}} \right)^\times \tilde{R}.
\notag 
\\&= -\left(k_z\mathbf{e_3} \times \hat{R}z- k_z\mathbf{e_3} \times \hat{R} \tilde{z}+ k_m \bar{\mathbf m}_{\mathcal{I}} \times \hat{R} \bar{\mathbf m}_{\mathcal{B}} \right)^\times \tilde{R}\notag\end{align}

From Lemma~\ref{Lemma1}, it follows that \( \hat z \to z\) exponentially, which implies that \( \bar{\mathbf m}_{\mathcal{B}} \to \bar{\bar{\mathbf m}}_{\mathcal{B}} \), with \( \bar{\bar{\mathbf m}}_{\mathcal{B}} = \bar{\pi}_{z} \mathbf{m}_{\mathcal{B}} \). Moreover, one can show that \(\bar{\mathbf m}_{\mathcal{B}} = \bar{\bar{\mathbf m}}_{\mathcal{B}} + \mathcal{O}(\tilde{z}).\)
Expressing this in terms of \(\tilde x \), and in view of ~\eqref{eq:riccati_observer}, one obtains the closed-loop system:
\begin{subequations} \label{eq:closed_loop}
\begin{align}
\dot{\tilde{R}} &= -\left(
k_z \mathbf{e}_3 \times \hat{R}z 
+ k_m \bar{\mathbf m}_{\mathcal{I}} \times \hat{R} \bar{\bar{\mathbf m}}_{\mathcal{B}} 
+ \mathcal{O}(\tilde x)
\right)^\times \tilde{R}, \label{eq:closed_loop_att} \\
\dot{\tilde{x}} &= (A - KC)\tilde{x}. \label{eq:closed_loop_trans}
\end{align}
\end{subequations}
The above system can be seen as a cascade interconnection of a non-linear system on \(\mathrm{SO}(3)\)~\eqref{eq:closed_loop_att} and the LTV system on \(\mathbb{R}^5\)~\eqref{eq:closed_loop_trans}. To prove the AGAS of the interconnection system, we begin by proving that subsystem~\eqref{eq:closed_loop_att} is AGAS for \(\tilde{x} = 0_{5\times1}\).
From~\eqref{eq:closed_loop_att}, it follows, as shown in~\cite{mahony2008nonlinear}, that the equilibrium sets are 
\(\mathcal{E}_s = \{I_3\}\) and \(\mathcal{E}_u = \left\{(U\Lambda U^\top, 0)\,\middle|\, \Lambda = \mathrm{diag}(1, -1, -1),\, U \in \mathrm{SO}(3) \right\}.\)
The singleton set \(\mathcal{E}_s\) is the stable equilibrium, and the set \(\mathcal{E}_u\) is the set of unstable equilibria (see~\cite[Th.~6.1]{VanGoor2025}). It consists of all 180-degree rotations, each defined by an axis on \(S^2\), which corresponds to a 2D space embedded in the 3D manifold \(\mathrm{SO}(3) \), and thus has measure zero in \(\mathrm{SO}(3) \). It follows that the stable equilibrium \(\tilde{R} = I_3\) is almost globally asymptotically stable for subsystem~\eqref{eq:closed_loop_att}. To complete the proof, we now examine the full interconnection
system. Since the estimation error \(\tilde x\) in~\eqref{eq:closed_loop_trans} evolves independently of \(\tilde{R}\) and is GES from Lemma~\ref{Lemma1}, there exist constants \(\delta, \alpha > 0\) such that \(\tilde x\) satisfies \(
\|\tilde x(t)\| \leq \delta \exp(-\alpha t)\, \|\tilde x(0)\|, \forall t \geq 0.
\)
Thus, \(\tilde x\) remains uniformly bounded, meaning there exists a compact set \(S \subset \mathbb{R}^5\) such that \(\tilde x(t) \in S\) for all \(t \geq 0\). Therefore, according to~\cite[Proposition~2]{Angeli2010}, one can conclude that subsystem~\eqref{eq:closed_loop_att} is almost globally Input-to-State Stable (ISS) with respect to \(\tilde{R} = I_3\) and input \(\tilde x\). Hence, given that \(\tilde x = 0_{5\times 1}\) for system~\eqref{eq:closed_loop_trans} is GES and that subsystem~\eqref{eq:closed_loop_att} with \(\tilde x = 0_{5\times 1}\) is AGAS at \(\tilde{R} = I_3\) and almost globally ISS with respect to \(\tilde x\), it follows from~\cite[Th.~2]{Angeli2004} that the cascaded interconnection system~\eqref{eq:closed_loop} is AGAS at \((\tilde{R}, \tilde x) = (I_3,0_{5\times 1})\).
\end{proof}

\subsection{Discrete-Time Implementation}
The proposed observer is implemented at the IMU sampling period~$T$. 
Over each interval $[t_k,t_{k+1})$, we assume the measured acceleration 
$\mathbf a_k$ and angular velocity $\boldsymbol\omega_k$ are constant 
(see~\cite{Bryne2017}). Let $\Omega_k \doteq \boldsymbol\omega_k^\times$ 
and $\theta_k \doteq \|\boldsymbol\omega_k\|T$. Approximating the discrete process noise by $Q_{d,k}\approx Q_k T$, the transition block associated  with the tilt dynamics satisfies 
$\dot \phi_{22}(t)=-\Omega_k \phi_{22}(t)$ with $\phi_{22}(0)=I_3$, 
which integrates to the incremental rotation
\begin{equation}
\phi_{22,k} 
= I_3 - \frac{\sin\theta_k}{\|\boldsymbol\omega_k\|}\,\Omega_k
+ \frac{1-\cos\theta_k}{\|\boldsymbol\omega_k\|^2}\,\Omega_k^2 .
\label{eq:Phi22}
\end{equation}
Using this result, a first-order discretization of the continuous-time 
state and input matrices in~\eqref{eq:state_command_matrix} yields
\begin{equation}
A_{d,k} \approx
\begin{bmatrix}
1 & T & \tfrac{T^{2}}{2}\,\mathbf a_k^\top \\
0 & 1 & T\,\mathbf a_k^\top \\
0 & 0 & \phi_{22,k}
\end{bmatrix},\qquad
B_{d,k} =
\begin{bmatrix}
\tfrac{T^{2}}{2} \\
T \\
\mathbf 0_{3\times 1}
\end{bmatrix}.
\label{eq:AdBd}
\end{equation}
The resulting discrete-time observer, summarized in 
Algorithm~\ref{algo:Discrete_proposed_obs}, follows the standard 
correction--prediction structure with the above state and input matrices. The attitude observer~\eqref{eq:attitude_observer} is discretized at the 
IMU frequency using exponential-Euler integration on $\mathrm{SO}(3)$~\cite{mahony2008nonlinear} (see line~20 of Algorithm~\ref{algo:Discrete_proposed_obs}). For simplicity, in the present formulation, the magnetometer is assumed to provide 
measurements at the same frequency as the IMU. 
Nevertheless, the framework readily accommodates asynchronous updates, and one may consider the case where the magnetometer operates at a different (typically lower) sampling rate; see \cite{Bryne2017}.

\begin{algorithm}[!t]
\caption{Discrete-Time Implementation of the proposed observer}
\label{algo:Discrete_proposed_obs}
\begin{algorithmic}[1]
\Statex \textbf{Input:} $\hat x_{0|0},\,P_{0|0},\hat R_0$; $g_k,\mathbf a_k,\boldsymbol{\omega}_k$ ; $\mathbf m_{\mathcal B,k}$; $y_{p,k}$; $T$; $Q_k$; $M_k$; gains $k_z,k_m$
\Statex \textbf{Output:} $\hat x_k,\,\hat R_{k},\,$ \(\forall k \in \mathbb{N}_{\geq 1}\)
\For{each time \(k \geq 1\)}
\If{IMU data \(\mathbf a_k\), \(\omega_k\) is available}
    \State $\Omega_k \gets \boldsymbol{\omega}_k^\times$,\quad $\theta_k \gets \|\omega_k\|\,T$, \quad $Q_{d,k}\gets Q_k\,T$;
    \State {$\phi_{22,k} \gets $~\eqref{eq:Phi22};}
    \State $A_{d,k}, B_{d,k} \gets$~\eqref{eq:AdBd};
    \State $\hat x_{k+1|k} \gets A_{d,k}\hat x_{k|k} + B_{d,k} g_k$;
    \State $P_{k+1|k} \gets A_{d,k}P_{k|k}A_{d,k}^\top + Q_{d,k}$
\EndIf
\If {barometer data is available}
    \State $C_k \gets$\eqref{eq:output_matrix}; $M_{d,k}\gets M_k$
    \State $K_{k+1} \gets P_{k+1|k} C_k^\top (C_kP_{k+1|k} C_k^\top + M_{d,k})^{-1}$
    \State \(\hat x_{k+1|k+1} \gets \hat x_{k+1|k}+ K_{k+1}\left(y_{p,k+1}-C_k \hat x_{k+1|k}\right)\)
    \State \(P_{k+1|k+1} \gets (I_5-K_{k+1}C_k)P_{k+1|k}\)
\Else
    \State $\hat x_{k+1|k+1}\gets \hat x_{k+1|k}$; $P_{k+1|k+1}\gets P_{k+1|k}$.
\EndIf
\State $P_{k+1|k+1}\gets \frac{1}{2}(P_{k+1|k+1} + P_{k+1|k+1}^{\top})$
\State $\bar{\mathbf m}_{\mathcal I} \gets \bar\pi_{\mathbf e_3}\mathbf m_{\mathcal I}$;\quad $\bar{\mathbf m}_{\mathcal B,k} \gets \bar\pi_{\hat z_k}\mathbf m_{\mathcal B,k}$
\State $\sigma_{R,k} \gets k_z(\mathbf e_3 \times \hat R_k \hat z_k) + k_m(\bar{\mathbf m}_{\mathcal I} \times \hat R_k \bar{\mathbf m}_{\mathcal B,k})$
\State $\hat R_{k+1} \gets \hat R_k \exp\!\big((\boldsymbol{\omega}_k - \hat R_k^\top \sigma_{R,k})^\times T\big)$
\EndFor
\end{algorithmic}
\end{algorithm}
\section{SIMULATION RESULTS}
To evaluate the performance of the proposed Barometer-Aided cascade observer, we conduct a simulation of a vehicle illustrated in figure~\ref{fig:veh_baro_imu} equipped with a barometer-IMU system moving in 3D space. The ground truth angular velocity and the inertial altitude are generated as follows:
\[
\boldsymbol{\omega}(t) =\begin{bmatrix} 0.4\sin(0.5t)\\ 0.5\sin(0.3t + \pi/4)\\0.3 \sin(0.7 t + \pi/3)\end{bmatrix}, \quad h(t) = -5\sqrt{3}\sin(2t)/4,
\]
and the body-frame linear acceleration is generated as follows:
\[
\mathbf{a}(t) =R^\top\big(
\begin{bmatrix}
- \cos(t)&- \sin(2t)&5\sqrt{3}\sin(2t)
\end{bmatrix}^{\top}-\mathbf{g}\big).
\]

A Monte Carlo simulation with 50 runs is performed, where the initial estimates are randomly sampled from Gaussian distributions. The initial state estimates are normally distributed around $\hat x(0) = \begin{bmatrix} 5 & 5 & \hat R(0)^\top \mathbf{e}_3\end{bmatrix}^{\top}$ with standard deviation $\boldsymbol{\sigma}_{x} = \begin{bmatrix}8&8&0.5&0.5&0.5\end{bmatrix}^\top$ . The initial orientation estimates are normally distributed around $\hat R(0)$ which corresponds to the initial angles \(\begin{bmatrix}60^\circ; -30^\circ; 45^\circ\end{bmatrix};\)  with a standard deviation of $104^\circ$ per axis. The parameters are set as follows: \(Q = 10I_5 ,~k_z=80,~k_m=25\).
The constant vector $\mathbf{m}_\mathcal{I}$ is set to $\mathbf{m}_\mathcal{I} =\begin{bmatrix} 1/\sqrt{2} & 0& 1/\sqrt{2} \end{bmatrix}^{\top}$. 
The IMU measurements $\mathbf{a}(t)$, $\boldsymbol{\omega}(t)$, and $\mathbf{m_B}(t)$ in body frame, sampled at $200~(Hz)$, are corrupted with zero-mean Gaussian noise with standard deviations $0.05$, $0.05$, and $0.02$, respectively. The barometer sampled at $5~(Hz)$ is corrupted with zero-mean Gaussian noise with standard deviation $\sigma_b^2 = 0.001$. Figures~\ref{fig:Tilt_Component_Error}, and~\ref{fig:Euler_Angles_Att_Error} illustrate the reduced attitude components and error (\(\left\| \hat z - z \right\|\)), and the Euler angles and attitude estimation error (\(\operatorname{trace}\!\left(I_{3} - R\hat{R}^\top\right)\)).
The simulation results demonstrate that all estimation errors decrease and converge to zero, and that both the tilt and altitude estimates remain bounded with fast convergence for all initial conditions. These results confirm the expected asymptotic convergence of the proposed observer.

\begin{figure}[!t]
    \centering
    \includegraphics[width=0.75\linewidth]{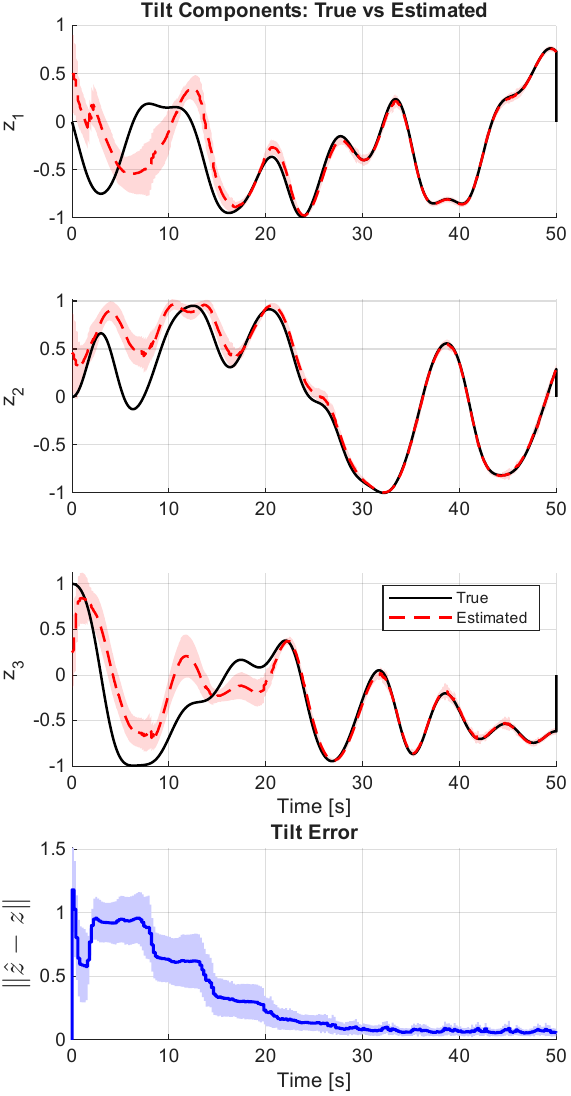}
    \caption{Tilt components and tilt error.}
    \label{fig:Tilt_Component_Error}
\end{figure}

\begin{figure}[!t]
    \centering
    \includegraphics[width=.75\linewidth]{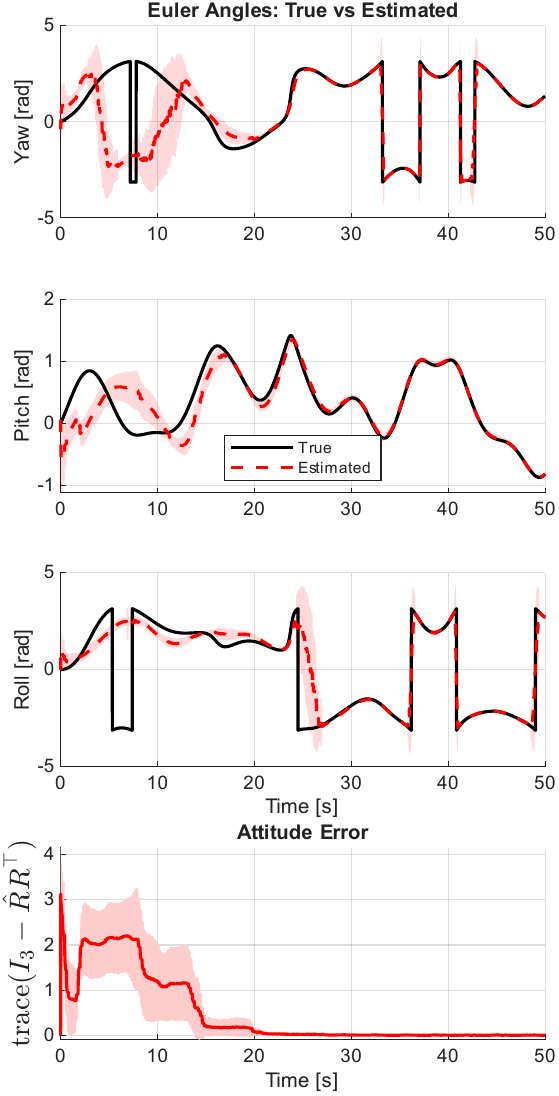}
    \caption{Euler angles and attitude error.}
    \label{fig:Euler_Angles_Att_Error}
\end{figure}

\section{CONCLUSION}
In this work, we proposed a cascade observer for barometer-aided attitude estimation that combines a continuous-discrete Riccati observer for altitude and vertical dynamics with a nonlinear observer on $\mathrm{SO}(3) $ for full attitude estimation. Theoretical analysis (Lemma~\ref{Lemma1} and Theorem~\ref{Theorem1}) established almost-global convergence of the interconnection error dynamics under persistently exciting motion.
Monte Carlo simulation results confirm the theoretical results, showing that the proposed observer achieves consistent convergence of both reduced attitude and full attitude errors to zero, even under initial state and orientation uncertainties.
Future work will focus on extending the approach to account for biased IMU measurements.
\bibliographystyle{IEEEtran}
\bibliography{root}

\end{document}